\documentclass[11pt]{article}
\usepackage[scale=.675,centering]{geometry}
\usepackage{amssymb,amsmath}
\usepackage{latexsym}
\usepackage[round]{natbib}
\usepackage{amsthm}
\usepackage{wrapfig}
\setlength{\parskip}{.2cm}

\setlength{\parindent}{.45cm}

\usepackage{eulervm}

\date{October 16, 2007}

\title{Probabilistic coherence and proper scoring rules%
\thanks{\copyright\, 2007  by the authors. This paper may be reproduced, in its
entirety, for non-commercial purposes.
Research supported by NSF grants PHY-0652854
to Lieb, and PHY-0652356 to Seiringer.
R.S. also acknowledges partial support from an A.\ P.\ Sloan Fellowship.}}

\author{%
Joel Predd\\ Rand Corporation \and
Robert Seiringer\\ Princeton University\and
Elliott H.\ Lieb \\ Princeton University \and
Daniel Osherson \\ Princeton University \and
Vincent Poor\\ Princeton University \and
Sanjeev Kulkarni\\ Princeton University}



\newcommand{\PROB}{\mbox{$\mu$}}
\newcommand{\CONV}{\mbox{\textit{conv}}}
\newcommand{\MID}{\mbox{$\ :\ $}}

\def\BE{\mbox{$\cal E$}}
\def\PE{\mbox{$\textsf{P}_{\!\!s}$}}
\newcommand{\CF}[1]{\mbox{$C_{#1}$}}

\newcommand\vecp{{\mathbold p}}
\newcommand\vecq{{\mathbold q}}
\newcommand\vecf{{\mathbold f}}
\newcommand\vecg{{\mathbold g}}
\newcommand\vecx{{\mathbold x}}
\newcommand\vecy{{\mathbold y}}
\newcommand\vecv{{\mathbold v}}
\newcommand\vecpi{{\mathbold \pi}}


\theoremstyle{definition}
\newtheorem{prop}{Proposition}
\newtheorem{cor}{Corollary}
\newtheorem{thm}{Theorem}

\newtheorem{defn}{Definition}

\newtheorem{lem}{Lemma}

\begin{document}

\maketitle
\begin{abstract}
\noindent
We provide self-contained proof of a theorem relating
probabilistic coherence of forecasts to their non-domination by
rival forecasts with respect to any proper scoring rule. The
theorem appears to be new but is closely
related to results achieved by
other investigators.
\end{abstract}

\section{Introduction}

Scoring rules measure the quality of a probability-estimate for
a given event, with lower
scores signifying probabilities that are closer to the
event's status ($1$ if it occurs, $0$ otherwise).
The sum of the scores for estimates $\vecp$ of a vector \BE\ of events
is called the ``penalty'' for $\vecp$.
Consider two potential defects in $\vecp$.
\begin{itemize}
\item There may be rival estimates $\vecq$ for \BE\ whose penalty
is guaranteed to be lower than the one for $\vecp$, regardless of
which events come to pass.
\item The events
in \BE\ may be related by inclusion or partition, and $\vecp$ might
violate constraints imposed by the
probability calculus (for example, that the estimate for an
event not exceed the estimate for any event that includes it).
\end{itemize}

\noindent
Building on the work of earlier investigators (see below), we
show that for a broad class of scoring rules known as
``proper'' the two defects are equivalent.
An exact statement appears as Theorem \ref{thm1}. To reach
it, we first explain key concepts intuitively (the next
section) then formally (Section \ref{frame}). Proof of the
the theorem proceeds via three propositions of independent
interest (Section \ref{threep}). We conclude with
generalizations of our results and an open question.

\section{Intuitive account of concepts}\label{sec1}

Imagine that you attribute probabilities $.6$ and $.9$ to events $E$ and $F$,
respectively, where $E\subseteq F$. It subsequently turns out that $F$ comes to
pass but not $E$. How shall we assess the perspicacity of your two estimates,
which may jointly be called a \textbf{probabilistic forecast}?
%
%
According to one method (due to \citealp{Brier50}) truth and falsity are coded
by $1$ and $0$, and your estimate of the chance of $E$ is assigned a score of
$(0 - .6)^2$ since $E$ did not come true (so your estimate should ideally have
been zero). Your estimate for $F$ is likewise assigned $(1 - .9)^2$ since it
should have been one. The sum of these numbers serves as overall
\textbf{penalty}.

Let us calculate your expected penalty for $E$ (prior to discovering the facts). With $.6$
probability you expected a score of $(1 - .6)^2$, and with the
remaining probability you expected a score of $(0 - .6)^2$,
hence your overall expectation was $.6(1 - .6)^2 + .4(0 - .6)^2
= .24$. Now suppose that you attempted to improve (lower) this
expectation by insincerely announcing $.65$ as the chance of
$E$, even though your real estimate is $.6$. Then
your expected penalty would be
$.6(1 - .65)^2 + .4(0 - .65)^2 = .2425$, worse than before.
Differential calculus reveals the general fact:

\begin{quote}
Suppose your probability for an event $E$ is $p$, that your
announced probability is $x$, and that your penalty is assessed
according to the rule: $(1 - x)^2$ if $E$ comes out true;
$(0 - x)^2$ otherwise. Then your expected
penalty is uniquely minimized by choosing $x = p$.
\end{quote}

\noindent
Our scoring rule thus encourages sincerity since
your interest lies in announcing probabilities that conform to
your beliefs. Rules like this are called \textbf{proper}. (We add
a continuity condition in our formal treatment, below.)
For an example of an improper rule, substitute absolute deviation
for squared deviation in the original scheme. According to the new
rule, your expected penalty for $E$ is
$.6|1 - .6| + .4|0 - .6| = .48$ whereas it drops to
$.6|1 - .65| + .4|0 - .65| = .47$ if you fib as before.

\begin{wraptable}{r}{0.39\textwidth}\begin{small}
\centering
\fbox{\begin{tabular}{c|c|c}
& \multicolumn{2}{c}{Forecast}\\
\begin{tabular}{c}Logical\\ possibilities\\\end{tabular} & original & rival\\\hline
\begin{tabular}{lcl}
E & = & T\\
F & = & T\\
\end{tabular} & $.17$ & $.205$\\\hline

\begin{tabular}{lcl}
E & = & F\\
F & = & T\\
\end{tabular} & $.37$ & $1.105$\\\hline

\begin{tabular}{lcl}
E & = &F\\
F & = & F\\
\end{tabular} & $1.17$ & $1.205$\\
\end{tabular}}
\caption{Penalties for two forecasts
in alternative possible realities}\end{small}
\end{wraptable}
%
Consider next the rival forecast of $.95$ for $E$ and $.55$ for $F$.
Because $E \subseteq F$, this forecast is inconsistent with the
probability calculus (or \textbf{incoherent}). Table 1 shows that the original forecast \textbf{dominates}
the rival inasmuch as its penalty is lower however the facts play out.
This association of incoherence and domination is not an accident.
No matter what proper scoring rule is in force, any incoherent
forecast can be replaced by a coherent one whose penalty
is lower in every possible
circumstance; there is no such replacement for a coherent forecast.
This fact is formulated as Theorem \ref{thm1} in the next section. It
can be seen as partial vindication of probability as an expression
of chance.\footnote{The other classic vindication involves
sure-loss contracts; see \cite{Skyrms00}.}

These ideas have been discussed before, first by \cite{def74}
who began the investigation of dominated forecasts and
probabilistic consistency (called \textbf{coherence}). His work
relied on the \textbf{quadratic scoring rule}, introduced
above.\footnote{For analysis of de Finetti's work, see
\citealp{Joyce98}. Note that some authors use the term
\textbf{inadmissible} to qualify dominated forecasts.}
\cite{Lindley82} generalized de Finetti's theorem to a broad
class of scoring rules. Specifically, he proved that for every
sufficiently regular generalization $s$ of the quadratic score,
there is a transformation $T :\Re \rightarrow \Re$ such that a
forecast $\vecf$ is not dominated by any other forecast with
respect to $s$ if and only if the transformation of $\vecf$ by
$T$ is probabilistically coherent. The reliance on the
transformation $T$, however, clouds the interpretation of
Lindley's theorem.

Fresh insight into proper scoring rules comes from relating them to a
generalization of metric distance known as \textbf{Bregman divergence}
\citep{Breg67}. This relationship was studied by \cite{Savage71}, albeit
implicitly, and more recently by \cite{Ban06} and \cite{Gneiting07}. So far
as we know, their results have yet to be connected to the issue of dominance.

To pull together the threads of earlier discussions,
the present work offers a self-contained account of the relations
among (i) coherent forecasts, (ii) Bregman divergences, and (iii)
domination with respect to proper scoring rules.
Only elementary analysis is presupposed. We begin by formalizing
the concepts introduced above.\footnote{%
For application of scoring rules to the assessment of opinion,
see \cite{Gneiting07} along with
\citet[\S 2.7.2]{Bernardo94} and references cited there.}

\section{Framework and Main Result}\label{frame}

Let $\Omega$ be a nonempty \textbf{sample space}. Subsets of $\Omega$ are
called \textbf{events}. Let \BE\ be a vector $(E_1, \cdots, E_n)$ of $n \ge 1$
events over $\Omega$.
We assume
that $\Omega$ and \BE\ have been chosen and are now fixed for the remainder of
the discussion. We require \BE\ to have finite dimension $n$ but otherwise
our results hold for any choice of sample space and events. In particular,
$\Omega$ can be infinite.
We rely on the usual notation $[0,1], \ (0,1),  \ \{0,1\}$ to denote, respectively, the closed interval
$\{x:0\leq x \leq 1\}$, the open interval $\{x:0 <  x < 1\}$ and the two-point set containing
$0,1$.

\begin{defn}\label{nd1}
Any element of $[0,1]^n$ is called a \textbf{(probability) forecast (for \BE)}.
A forecast $\vecf$ is \textbf{coherent} just in case there is a probability measure
\PROB\ over $\Omega$ such that for all $i\le n$, $f_i = \PROB(E_i)$.
\end{defn}

\noindent
A forecast is thus a list of $n$ numbers drawn from the unit interval. They are
interpreted as claims about the chances of the corresponding
events in \BE. The first event in \BE\ is assigned the probability given by the
first number ($f_1$) in $\vecf$, and so forth.
A forecast is coherent if it is consistent with some probability
measure over $\Omega$.

This brings us to scoring rules. In what follows, the numbers $0$ and $1$ are
used to represent falsity and truth, respectively.

\begin{defn}\label{def1p}
A function $s:\{0,1\}\times [0,1] \to [0,\infty]$ is said to be a \textbf{proper scoring
rule} in case
\begin{enumerate}\setlength{\parskip}{.0cm}
\item\label{deflpb} $ps(1,x) + (1 - p)s(0,x)$ is uniquely minimized at $x = p$ for all $p\in[0,1]$.
\item\label{deflpa} $s$ is continuous, meaning that for $i\in\{0,1\}$, $\lim_{n\to \infty} s(i,x_n) = s(i,x)$ for any sequence $x_n \in [0,1]$ converging to $x$.
\end{enumerate}
\end{defn}

\noindent
For condition \ref{def1p}(\ref{deflpb}), think
of $p$ as the probability you have in mind, and $x$ as the one you announce.
Then $ps(1,x) + (1 - p)s(0,x)$ is your expected score. Fixing $p$ (your genuine
belief), the latter expression is a function of the announcement $x$. Proper
scoring rules encourage candor by minimizing the expected score exactly when
you announce $p$.\footnote{Some authors call such rules \textit{strictly proper}.}

The continuity condition is consistent with $s$ assuming the value $+\infty$.
This can only occur for the arguments $(0,1)$ or $(1,0)$, representing categorically mistaken
judgment. For if $s(0,p)=\infty$ for some $p\neq 1$, then $ps(1,x)+(1-p)s(0,x)$ can not
have a unique minimum at $x=p$; similarly, $s(1,p)<+\infty$ for $p\neq 0$.
A typical example of an unbounded proper scoring rule is $s(i,x)=-\ln|i-x|$ \citep{Good52}.
A comparison of alternative rules is offered in \cite{Selten98}.

For an event $E$, we let \CF{E} be the characteristic function of $E$; that
is, for all $\omega\in\Omega$,  $\CF{E}(\omega) = 1$ if $\omega\in E$ and $0$
otherwise. Intuitively, $\CF{E}(\omega)$ reports whether $E$ is true or false if Nature
chooses $\omega$.

\begin{defn}\label{def2}
Given proper scoring rule $s$,
the \textbf{penalty} \PE\ based on $s$ for forecast $\vecf$ and $\omega\in\Omega$ is given by:
\begin{equation}\label{penalty}
\PE(\omega,\vecf) = \sum_{i \le n}s(\CF{E_i}(\omega),f_i).
\end{equation}
\end{defn}

\noindent
Thus, \PE\ sums the scores (conceived as penalties) for all the events under
consideration.
Henceforth, the proper scoring rule $s$ is regarded as given and fixed.
The theorem below holds for any choice we make.

\begin{defn}\label{def3w}
Let a forecast $\vecf$ be given.
\begin{enumerate}\setlength{\parskip}{.0cm}
\item\label{def3wa}
$\vecf$ is \textbf{weakly dominated} by a forecast $\vecg$ in case
$\PE(\omega,\vecg) \le \PE(\omega,\vecf)$ for all $\omega\in\Omega$.
\item\label{def3wb}
$\vecf$ is \textbf{strongly dominated} by a forecast $\vecg$ in case
$\PE(\omega,\vecg) < \PE(\omega,\vecf)$ for all $\omega\in\Omega$.
\end{enumerate}
\end{defn}

\noindent
Strong domination by a rival, coherent forecast $\vecg$ is the price to be paid for an
incoherent forecast $\vecf$. Indeed, we shall prove:

\begin{thm}\label{thm1} Let a forecast $\vecf$ be given.
\begin{enumerate}\setlength{\parskip}{.0cm}
\item If $\vecf$ is coherent then it is not weakly dominated by any
forecast $\vecg\neq \vecf$.
\item If $\vecf$ is incoherent then it is strongly dominated by some {\it coherent} forecast $\vecg$.
\end{enumerate}
\end{thm}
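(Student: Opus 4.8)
The plan is to convert both claims into statements about a separable Bregman divergence, then settle Part (a) by taking expectations and Part (b) by projecting onto the coherent set. First I would record the geometry: since $\BE$ lists only $n$ events, the map $\omega \mapsto (\CF{E_1}(\omega),\dots,\CF{E_n}(\omega))$ takes only finitely many values $\vecx \in \{0,1\}^n$; call this finite set $X$. A forecast is coherent exactly when it is a $\PROB$-average of these truth patterns, so the set $\mathcal{C}$ of coherent forecasts equals $\CONV(X)$, a compact convex polytope in $[0,1]^n$. Next I would extract from properness the Savage representation of $s$. Writing $\phi(p,x) = p\,s(1,x) + (1-p)\,s(0,x)$ and $G(p) = \phi(p,p)$, condition \ref{def1p}(\ref{deflpb}) makes $G$ the pointwise minimum of the affine-in-$p$ functions $p \mapsto \phi(p,x)$, hence concave, and forces $x \mapsto \phi(p,x) - G(p)$ to vanish only at $x=p$. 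The key computation is that for $x \in \{0,1\}$ the score $s(x,f)$ equals the value at $x$ of the supporting line of $G$ at $f$; consequently $s(x,f) = G(x) + d_G(x,f)$, where $d_G(x,f) \ge 0$ is the concave Bregman divergence and vanishes iff $x=f$.

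Because $G(x)$ depends only on the truth pattern and not on the forecast, summing over coordinates gives $\PE(\omega,\vecf) = \kappa(\omega) + D(\vecx_\omega,\vecf)$, where $\kappa(\omega) = \sum_i G(\CF{E_i}(\omega))$ is common to all forecasts and $D(\vecx,\vecf) = \sum_i d_G(x_i,f_i)$ is a separable Bregman divergence. Since $G(0)=s(0,0)$ and $G(1)=s(1,1)$ are finite, $\kappa(\omega)$ is finite, so domination of $\vecf$ by $\vecg$ (weak or strong) is equivalent to the corresponding inequality between $D(\vecx,\vecf)$ and $D(\vecx,\vecg)$ over all $\vecx \in X$.

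For Part (a), suppose $\vecf$ is coherent, witnessed by $\PROB$ with $f_i = \PROB(E_i)$, and suppose $\vecg$ weakly dominates $\vecf$. Taking $\PROB$-expectations of $\PE(\cdot,\vecg) \le \PE(\cdot,\vecf)$ yields $\sum_i \phi(f_i,g_i) \le \sum_i \phi(f_i,f_i) = \sum_i G(f_i) < \infty$. But properness gives $\phi(f_i,g_i) \ge G(f_i)$ termwise, with equality iff $g_i = f_i$; the two inequalities force equality in every term, so $\vecg = \vecf$. For Part (b), suppose $\vecf \notin \mathcal{C}$ and let $\vecg$ be the Bregman projection of $\vecf$ onto the compact convex set $\mathcal{C}$, i.e. the minimizer of $\vecp \mapsto D(\vecp,\vecf)$ over $\mathcal{C}$ (which exists since each $d_G(\cdot,f_i)$ is convex and continuous). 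The first-order optimality condition together with the three-point identity $D(\vecx,\vecf) = D(\vecx,\vecg) + D(\vecg,\vecf) + \langle \nabla_{\vecp}D(\vecg,\vecf),\, \vecx - \vecg\rangle$ gives, for every $\vecx \in \mathcal{C}$, the generalized Pythagorean inequality $D(\vecx,\vecf) \ge D(\vecx,\vecg) + D(\vecg,\vecf)$. Since $\vecf \notin \mathcal{C}$ we have $\vecg \neq \vecf$ and hence $D(\vecg,\vecf) > 0$, so $D(\vecx,\vecf) > D(\vecx,\vecg)$ for all $\vecx \in X \subseteq \mathcal{C}$; adding back $\kappa(\omega)$ shows $\vecg$ strongly dominates $\vecf$, and $\vecg$ is coherent because it lies in $\mathcal{C}$.

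I expect the representation step and the boundary behavior to be the main obstacles rather than the geometric skeleton above. Because $G$ is only concave it need not be differentiable, so ``supporting line'' and ``$\nabla_{\vecp} D$'' must be read via one-sided derivatives or supergradients, and the three-point identity must be justified in that generality. More seriously, $s$ may take the value $+\infty$ at the categorically mistaken arguments $(0,1)$ and $(1,0)$ (as for the logarithmic rule), so a coordinate $f_i \in \{0,1\}$ can make $\PE(\omega,\vecf)$ infinite and can even make $D(\vecp,\vecf) \equiv +\infty$ on all of $\mathcal{C}$, leaving the projection ill-defined. The careful part of the argument is thus to treat such boundary coordinates separately---e.g. by showing that any coordinate where $\vecf$ is extreme but incoherently so can be pushed strictly inward at strictly lower penalty---thereby reducing to the finite, interior case where the projection argument applies verbatim.
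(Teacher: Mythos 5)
Your skeleton coincides with the paper's: de Finetti's identification of the coherent forecasts with $\CONV(V)$ (Proposition~\ref{dlem1}), the Savage-type rewriting of the penalty as a forecast-independent term plus a separable Bregman divergence (Proposition~\ref{robthm1} and Eq.~(\ref{repr})), and, for part (b), Bregman projection onto $\CONV(V)$ with the generalized Pythagorean inequality (Proposition~\ref{robthm2}). Within that skeleton your part (a) is correct and genuinely different from the paper's: the paper extends weak dominance from $V$ to $\CONV(V)$ using linearity of $\vecy\mapsto d_\Phi(\vecy,\vecg)-d_\Phi(\vecy,\vecf)$ and then evaluates at $\vecy=\vecf$, whereas you integrate $\PE(\omega,\vecg)\le\PE(\omega,\vecf)$ against the witnessing measure $\PROB$ and apply properness coordinatewise. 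Your route is more elementary at this point: it uses only unique minimization (plus the observation that the set where $\PE(\cdot,\vecf)=\infty$ is $\PROB$-null, so the expected penalty of $\vecf$ is finite), and needs neither the representation nor continuity of $s$.

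Part (b), however, has two genuine gaps, located exactly at the two ``obstacles'' you defer. First, the possible non-differentiability of $G$ cannot be handled ``via one-sided derivatives or supergradients'': with a mere supergradient selection, the first-order optimality condition at the projection and the three-point identity no longer combine, and the generalized Pythagorean inequality can fail --- the paper makes precisely this point in its closing open question, noting that differentiability of $\Phi$ is necessary for (\ref{pythagoras}) in general. What is actually needed, and what constitutes the technical heart of Proposition~\ref{robthm1}, is a proof that properness together with the continuity clause of Definition~\ref{def1p} forces $\varphi=-G$ to be differentiable on $(0,1)$; this is where continuity of $s$ enters, via the two one-sided difference-quotient bounds against $\psi(p)=s(0,p)-s(1,p)$. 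Your plan neither proves this nor can succeed without it.

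Second, your proposed boundary reduction --- that any coordinate where $\vecf$ is ``extreme but incoherently so can be pushed strictly inward at strictly lower penalty'' --- is false as stated. If $f_i=0$, properness at $p=0$ gives $s(0,x)>s(0,0)$ for every $x>0$, so moving $f_i$ inward strictly \emph{increases} the penalty at every $\omega\notin E_i$; perturbing $\vecf$ first can therefore never yield even weak domination. The paper's argument runs in the opposite order, by induction on $n$: restrict to the face of $[0,1]^n$ containing $\vecf$; obtain there (from the finite-gradient case or the induction hypothesis) a coherent $\tilde\vecg$ in that face strictly dominating $\vecf$ at every $\omega$ whose truth pattern $\vecv_\omega$ lies in the face; note that both penalties equal $+\infty$ at all remaining $\omega$; and only then move off the face, setting $\vecg_\epsilon=(1-\epsilon)\tilde\vecg+\epsilon l^{-1}\sum_i\vecv_i$ over the $l$ vertices $\vecv_i\in V$ outside the face. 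Continuity of $s$ up to the boundary (Lemma~\ref{tl}) preserves the finitely many strict inequalities for small $\epsilon$, while at the remaining $\omega$ the now-finite penalty of $\vecg_\epsilon$ beats $+\infty$. Without this device (or an equivalent one), your argument establishes part (b) only for forecasts at which $\nabla\Phi(\vecf)$ is finite.
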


\noindent Thus, if $\vecf$ and $\vecg$ are coherent  and $\vecf \neq
\vecg$ then neither weakly dominates the other.  The theorem follows from
three propositions of independent interest, stated in the next section.
We close the present section with a corollary.

\begin{cor}\label{cor1}
A forecast $\vecf$ is weakly dominated by a forecast $\vecg \neq \vecf$
if and only if $\vecf$ is
strongly dominated by a coherent forecast.

\end{cor}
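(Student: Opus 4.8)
The plan is to derive the corollary directly from Theorem \ref{thm1}, handling the two directions of the biconditional separately. The reverse implication is nearly immediate, and the forward implication will come from a simple dichotomy on whether $\vecf$ is coherent.

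For the reverse direction, suppose $\vecf$ is strongly dominated by a coherent forecast $\vecg$, so that $\PE(\omega,\vecg) < \PE(\omega,\vecf)$ for every $\omega\in\Omega$. First I would observe that strict inequality everywhere trivially yields the weak inequality $\PE(\omega,\vecg)\le \PE(\omega,\vecf)$ everywhere, which is exactly weak domination by $\vecg$. It then remains to check $\vecg\neq\vecf$. Since $\Omega$ is nonempty by assumption, fix any $\omega\in\Omega$; were $\vecg=\vecf$, we would have $\PE(\omega,\vecg)=\PE(\omega,\vecf)$, contradicting the strict inequality. Hence $\vecg\neq\vecf$, and $\vecf$ is weakly dominated by a distinct forecast.

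For the forward direction, suppose $\vecf$ is weakly dominated by some $\vecg\neq\vecf$, and argue by cases. If $\vecf$ were coherent, then Theorem \ref{thm1}(a) would assert that it is not weakly dominated by any forecast distinct from itself, contradicting the hypothesis; so $\vecf$ must be incoherent. Theorem \ref{thm1}(b) then furnishes a coherent forecast that strongly dominates $\vecf$, which is precisely what we wanted.

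There is no genuine obstacle here: once Theorem \ref{thm1} is available, the corollary merely recombines its two parts with the elementary fact that strong domination entails weak domination by a distinct forecast. The single point that warrants a word of care is the observation in the reverse direction that $\vecg\neq\vecf$, which relies on the nonemptiness of $\Omega$.
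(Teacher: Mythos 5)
Your proposal is correct and follows essentially the same route as the paper: the right-to-left direction is the elementary observation that strong domination implies weak domination by a forecast necessarily distinct from $\vecf$, and the left-to-right direction combines Theorem \ref{thm1}(a) (to rule out coherence of $\vecf$) with Theorem \ref{thm1}(b) (to produce the strongly dominating coherent forecast). The only difference is that you spell out the detail the paper calls ``immediate'' — in particular the use of nonemptiness of $\Omega$ to conclude $\vecg\neq\vecf$ — which is a sound and worthwhile clarification.
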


\begin{proof}[Proof of Corollary \ref{cor1}]
The right-to-left direction is immediate from Definition \ref{def3w}. For the
left-to-right direction, suppose forecast $\vecf$ is weakly dominated by some
$\vecg\neq \vecf$. Then by Theorem \ref{thm1}(a), $\vecf$ is not coherent. So by
Theorem \ref{thm1}(b), $\vecf$ is strongly dominated by some coherent forecast.
\end{proof}

\section{Three Propositions}\label{threep}

The first proposition is a characterization of coherence. It is due to \cite{def74}.

\begin{defn}\label{nd2}
Let $V=\{(\CF{E_1}(\omega), \cdots,
\CF{E_n}(\omega))\MID\omega\in\Omega\}\subseteq\{0,1\}^n$.
Let the cardinality of $V$ be $k$.
Let $\CONV(V)$ be the \textbf{convex hull} of $V$, i.e., $\CONV(V)$ consists
of all vectors of form $a_1 \vecv_1 + \cdots\ + a_k\vecv_k$, where $\vecv_i\in V$, $a_i \ge 0$,
and $\sum_{i=1}^k a_i = 1$.
\end{defn}

\noindent
The $E_i$ may be related in various ways, so $k < 2^n$ is possible
(indeed, this is the case of interest).

\begin{prop}\label{dlem1}
A forecast $\vecf$ is coherent if and only if $\vecf\in\CONV(V)$.
\end{prop}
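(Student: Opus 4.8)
The plan is to prove both directions by exploiting the finite partition of $\Omega$ into the cells on which the joint truth-values of the events are constant, and to identify these cells with the vertices of $V$. Concretely, writing $V=\{\vecv_1,\ldots,\vecv_k\}$, I would set $A_j:=\{\omega\in\Omega:(\CF{E_1}(\omega),\ldots,\CF{E_n}(\omega))=\vecv_j\}$ for $j=1,\ldots,k$. Each $A_j$ is nonempty (its defining vector $\vecv_j$ belongs to $V$, hence is attained), the $A_j$ are pairwise disjoint, and they cover $\Omega$, so they partition it. Since every $A_j$ is a finite Boolean combination of the $E_i$, it is measurable whenever the $E_i$ are.

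For the ``only if'' direction I would assume $\vecf$ is coherent, witnessed by a probability measure $\PROB$ with $f_i=\PROB(E_i)$ for all $i$. Put $a_j:=\PROB(A_j)\ge 0$; then $\sum_{j=1}^{k}a_j=\PROB(\Omega)=1$ because the $A_j$ partition $\Omega$. The key step is the decomposition $E_i=\bigcup_{j:\,(\vecv_j)_i=1}A_j$, which by additivity gives $f_i=\PROB(E_i)=\sum_{j:\,(\vecv_j)_i=1}a_j=\sum_{j=1}^{k}a_j(\vecv_j)_i$. Reading this across all coordinates $i$ yields $\vecf=\sum_{j=1}^{k}a_j\vecv_j$, which exhibits $\vecf$ as a convex combination of members of $V$, i.e.\ $\vecf\in\CONV(V)$.

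For the ``if'' direction I would assume $\vecf=\sum_{j=1}^{k}a_j\vecv_j$ with $a_j\ge 0$ and $\sum_{j}a_j=1$. Since each $\vecv_j\in V$ lies in the range of the truth-value map, I would choose a witnessing state $\omega_j\in\Omega$ with $(\CF{E_1}(\omega_j),\ldots,\CF{E_n}(\omega_j))=\vecv_j$, and then define the finitely supported set function $\PROB(B)=\sum_{j:\,\omega_j\in B}a_j$ for $B\subseteq\Omega$. This is a genuine probability measure ($\PROB\ge 0$, trivially countably additive, $\PROB(\Omega)=\sum_j a_j=1$), and since $\omega_j\in E_i$ holds exactly when $(\vecv_j)_i=1$, it satisfies $\PROB(E_i)=\sum_{j:\,(\vecv_j)_i=1}a_j=f_i$. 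Hence $\PROB$ witnesses the coherence of $\vecf$.

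I expect no serious obstacle, as the whole argument is finite bookkeeping made possible by the finiteness of $V\subseteq\{0,1\}^n$. The one point requiring care is measurability: in the forward direction $\PROB(A_j)$ must be defined, which holds because each $A_j$ lies in the finite algebra generated by the $E_i$ on which $\PROB$ is already defined; in the converse direction the constructed $\PROB$ is a point mass supported on finitely many states and may be taken on the full power set, so measurability is automatic. I would also remark that distinctness of the $\vecv_j$ forces the chosen $\omega_j$ to be distinct, although the argument does not actually rely on this.
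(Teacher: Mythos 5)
Your proof is correct and takes essentially the same approach as the paper's: both partition $\Omega$ into the cells of constant joint truth values (the paper's atoms $e_j=\bigcap_i E_i^*$ are exactly your sets $A_j$), identify these cells bijectively with the elements of $V$, and use finite additivity to translate between $\PROB(E_i)$ and convex combinations of the $\vecv_j$. The only cosmetic difference is that in the converse direction you explicitly build the witnessing measure as point masses at chosen states $\omega_j$, whereas the paper simply asserts the existence of a measure with $\PROB(e_j)=a_j$; your construction just makes that assertion concrete.
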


The next proposition characterizes scoring rules in
terms of convex functions. Recall that a convex function $\varphi$ on
a convex subset of $\Re^n$ satisfies $\varphi(a \vecx + (1-a) \vecy)
\leq a \varphi(\vecx) + (1-a) \varphi(\vecy)$ for all $0<a<1$ and all $\vecx$, $\vecy$ in
the subset.  {\it Strict} convexity means that the inequality is
strict unless $\vecx=\vecy$. Variants of the following fact are proved
in \cite{Savage71}, \cite{Ban06}, and \cite{Gneiting07}.

\begin{prop}\label{robthm1}
Let $s$ be a proper scoring rule. Then the function $\varphi:[0,1] \to \Re$ defined by $\varphi(x)= -x s(1,x) -(1-x) s(0,x)$ is a bounded, continuous and strictly convex
function, differentiable for $x\in (0,1)$. Moreover,
\begin{equation}\label{reps}
s(i,x) = -\varphi(x) -\varphi'(x)(i-x)\quad \forall x\in(0,1)\,.
\end{equation}
\noindent
Conversely, if a function $s$ satisfies (\ref{reps}), with $\varphi$ bounded, strictly convex and differentiable on $(0,1)$,
and $s$ is continuous on $[0,1]$, then $s$ is a proper scoring rule.
\end{prop}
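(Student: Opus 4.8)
The plan is to exploit the dual description of $\varphi$ as (minus) the optimal expected score. Write $G(p,x)=ps(1,x)+(1-p)s(0,x)$ for the expected score of announcing $x$ when one's true probability is $p$. For each fixed $x$ this is an affine function of $p$, and properness (Definition \ref{def1p}(\ref{deflpb})) says exactly that $\min_{x\in[0,1]}G(p,x)$ is attained, uniquely, at $x=p$; since $G(p,p)=-\varphi(p)$ by the definition of $\varphi$, we obtain the key identity $-\varphi(p)=\min_{x\in[0,1]}G(p,x)$. A pointwise minimum of affine functions is concave, so $\varphi$ is convex on all of $[0,1]$. Boundedness is then immediate: $-\varphi(p)\ge 0$ because $s\ge 0$, while $-\varphi(p)\le G(p,\tfrac12)\le\max\{s(1,\tfrac12),s(0,\tfrac12)\}<\infty$.

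Next I would extract differentiability and the representation (\ref{reps}) from a supporting-line picture. For fixed $x$, the affine map $p\mapsto G(p,x)$ touches the concave function $-\varphi$ at $p=x$ and lies weakly below it elsewhere, so its slope $s(1,x)-s(0,x)$ is a supporting slope of $-\varphi$ at $x$; equivalently $s(0,x)-s(1,x)\in[\varphi'_-(x),\varphi'_+(x)]$ for the one-sided derivatives. Because $s$ is continuous, the function $x\mapsto s(0,x)-s(1,x)$ is continuous on $(0,1)$, and squeezing it between the (monotone) one-sided derivatives forces $\varphi'_-(x)=\varphi'_+(x)=s(0,x)-s(1,x)$, so $\varphi$ is differentiable on $(0,1)$. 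The supporting line at $x$ is then the tangent $p\mapsto-\varphi(x)-\varphi'(x)(p-x)$, and evaluating at $p=i\in\{0,1\}$ yields $s(i,x)=-\varphi(x)-\varphi'(x)(i-x)$, which is (\ref{reps}). For strict convexity I would argue by contradiction: if $\varphi'$ were constant on some subinterval of $(0,1)$, then by the representation both $s(0,\cdot)$ and $s(1,\cdot)$ would be constant there, so $G(p,\cdot)$ would be the same affine function for every announcement in that subinterval; taking the true probability inside it then produces distinct minimizers, contradicting the uniqueness in Definition \ref{def1p}(\ref{deflpb}).

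The delicate step is continuity of $\varphi$ at the endpoints $0$ and $1$, which is where the possibility $s(1,0)=+\infty$ or $s(0,1)=+\infty$ bites. I would isolate the elementary lemma that a bounded convex $\varphi$ satisfies $x\varphi'(x)\to 0$ as $x\to 0^+$ and $(1-x)\varphi'(x)\to 0$ as $x\to 1^-$; this follows by integrating the monotone $\varphi'$ over $[x,2x]$, where boundedness forces $\varphi(2x)-\varphi(x)\to 0$, with a matching lower bound from $[x/2,x]$. Feeding this into the representation, $xs(1,x)=-x\varphi(x)-x(1-x)\varphi'(x)\to 0$, so $\varphi(x)=-xs(1,x)-(1-x)s(0,x)\to -s(0,0)=\varphi(0)$ by continuity of $s$; the case $x\to 1^-$ is symmetric. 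I expect this endpoint analysis to be the main obstacle, since convexity alone only yields the one-sided inequality $\lim_{x\to 0^+}\varphi(x)\le\varphi(0)$, and the reverse must be wrung out of boundedness.

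For the converse I would run the computation backwards. Substituting (\ref{reps}) into $G(p,x)$ and simplifying gives $G(p,x)=-\bigl[\varphi(x)+\varphi'(x)(p-x)\bigr]$ for $x\in(0,1)$, i.e.\ minus the value at $p$ of the tangent to $\varphi$ at $x$. Since $\varphi$ is convex its tangent lies below the graph, so $G(p,x)\ge-\varphi(p)=G(p,p)$, and strict convexity makes this strict for $x\ne p$; hence for $p\in(0,1)$ the expected score is uniquely minimized at $x=p$ among interior announcements. Announcements at the endpoints, and the boundary cases $p\in\{0,1\}$, I would dispatch using continuity of $s$ together with the monotonicity of $x\mapsto G(0,x)$ and $x\mapsto G(1,x)$ (which again follows from convexity of $\varphi$), so that no endpoint announcement can beat $x=p$. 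Continuity of $s$ on $[0,1]$ is assumed, giving Definition \ref{def1p}(\ref{deflpa}) directly, and the two conditions together show that $s$ is a proper scoring rule.
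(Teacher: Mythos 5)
Your forward direction is essentially the paper's own proof: you obtain convexity of $\varphi$ from $-\varphi=\min_x G(\cdot,x)$ with $G(p,x)=p\,s(1,x)+(1-p)\,s(0,x)$, boundedness from $s\ge 0$, differentiability and (\ref{reps}) by sandwiching the continuous function $s(0,\cdot)-s(1,\cdot)$ between the one-sided derivatives of $\varphi$, and endpoint continuity from the lemma $\lim_{x\to 0}x\varphi'(x)=\lim_{x\to1}(1-x)\varphi'(x)=0$, which is exactly the paper's Lemma~\ref{tl}, proved the same way. Two cosmetic points: the affine function $p\mapsto G(p,x)$ lies weakly \emph{above} the concave function $-\varphi$ (it majorizes the lower envelope $\min_x G(\cdot,x)$), not below as you wrote, though your conclusion $s(0,x)-s(1,x)\in[\varphi'_-(x),\varphi'_+(x)]$ is the correct one; and your strict-convexity argument (constancy of $\varphi'$ on a subinterval would make every announcement in that subinterval tie, contradicting uniqueness) is a sound alternative to the paper's direct two-inequality computation.

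The one genuine gap is in the converse, where you must rule out endpoint announcements $x\in\{0,1\}$ when the true probability $p$ lies in $(0,1)$. Your stated tool---continuity of $s$ plus monotonicity of $x\mapsto G(0,x)$ and $x\mapsto G(1,x)$---does handle the boundary cases $p\in\{0,1\}$, but it cannot handle interior $p$: those monotonicities give $s(1,0)\ge s(1,p)$ and $s(0,0)\le s(0,p)$, which pull in \emph{opposite} directions, whereas what must be shown is the quantitative inequality $p\,[s(1,0)-s(1,p)]>(1-p)\,[s(0,p)-s(0,0)]$. Taking the limit $x\to 0^+$ in the interior inequality $G(p,x)>G(p,p)$ only yields the weak inequality $G(p,0)\ge G(p,p)$, so equality must still be excluded, and monotonicity of the two score curves separately cannot do it. A repair in your own spirit: convexity makes $x\mapsto G(p,x)$ non-increasing on $(0,p)$, since for $x<y<p$ one has $G(p,x)-G(p,y)\ge (p-y)\left(\varphi'(y)-\varphi'(x)\right)\ge 0$; hence $G(p,0)=\lim_{x\to 0^+}G(p,x)\ge G(p,p/2)>G(p,p)$. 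The paper proceeds differently at this step: using continuity of $s$, Eq.~(\ref{reps}) and Lemma~\ref{tl}, it computes $s(0,0)=-\varphi(0)$ and $s(1,0)=-\varphi(0)-\lim_{x\to 0^+}\varphi'(x)$, so that $G(p,0)=-\varphi(0)-p\lim_{x\to 0^+}\varphi'(x)$, which exceeds $-\varphi(p)=G(p,p)$ trivially when the limit is $-\infty$ and by strict convexity (the support line at $0$ lies strictly below the graph) when it is finite.
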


We note that the right side of (\ref{reps}), which is only defined for $x\in(0,1)$, can be continuously extended to $x=0,1$.
This is the content of the Lemma~\ref{tl} in the next section.
If the extended $s$ satisfies (\ref{reps}) then:
\begin{equation}\label{danAdd1}
  s(0,0)=  - \varphi(0) \quad {\rm and} \quad  s(1,1) = -  \varphi(1)\,.
\end{equation}

Finally, our third proposition concerns a well known property of
Bregman divergences (see, e.g., \citealp{Censor97}).
When we
apply the proposition to the proof of Theorem~\ref{thm1}, $C$ will be
the unit cube in $\Re^n$.

\begin{defn}\label{bregdef}
Let $C$ be a convex subset of $\Re^n$ with non-empty interior.
Let $\Phi:C\rightarrow \Re$ be a strictly convex function, differentiable in the interior of $C$,
whose gradient $\nabla \Phi$ extends to a bounded,  continuous function on $C$.
For $\vecx,\vecy\in C$, the \textbf{Bregman divergence} $d_{\Phi}:C\times C\to \Re$ corresponding
to $\Phi$ is given by
$$
d_{\Phi}(\vecy,\vecx)  = \Phi(\vecy) - \Phi(\vecx) -
\nabla \Phi(\vecx)\cdot(\vecy - \vecx).
$$
\end{defn}

\noindent
Because of the strict convexity of $\Phi$,
$d_{\Phi}(\vecy,\vecx) \geq 0$ with equality if and only if
$\vecy = \vecx$.

\begin{prop}\label{robthm2}
    Let $d_{\Phi}:C\times C\to \Re$ be a Bregman
    divergence, and let $Z \subseteq C$ be a closed convex subset of
    $\Re^n$. For $\vecx \in C \setminus Z$, there exists a unique $\vecpi_{\vecx}
    \in Z$, called the \textbf{projection of $\vecx$ onto $Z$}, such that
$$
d_{\Phi}(\vecpi_{\vecx},\vecx) \leq d_{\Phi}(\vecy,\vecx) \quad \forall
\vecy \in Z\,.
$$
\noindent
Moreover,
\begin{equation}\label{pythagoras}
d_{\Phi}(\vecy,\vecpi_{\vecx}) \leq d_{\Phi}(\vecy,\vecx) -
d_{\Phi}(\vecpi_{\vecx},\vecx) \quad \forall \vecy \in Z,\, \vecx \in C\setminus Z\,.
\end{equation}
\end{prop}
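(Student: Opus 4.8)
The plan is to fix $\vecx\in C\setminus Z$ and study the function $g(\vecy):=d_{\Phi}(\vecy,\vecx)=\Phi(\vecy)-\Phi(\vecx)-\nabla\Phi(\vecx)\cdot(\vecy-\vecx)$ on $Z$. Because $\Phi$ is strictly convex and the remaining two terms are affine in $\vecy$, the map $g$ is strictly convex; it is also continuous on $C$, since the bounded gradient makes $\Phi$ Lipschitz on the interior and hence continuous on $C$. Uniqueness of a minimizer is then immediate, as two distinct minimizers in the convex set $Z$ would place their midpoint in $Z$ with a strictly smaller value of $g$. For existence I would note that every sublevel set $\{\vecy\in C:g(\vecy)\le c\}$ is bounded: were it to contain points arbitrarily far from $\vecx$, convexity of $g$ along the ray issuing from $\vecx$ (where $g(\vecx)=0$), together with continuity, would force $g$ to vanish at some point other than $\vecx$, contradicting the fact noted above that $d_{\Phi}(\vecy,\vecx)=0$ only for $\vecy=\vecx$. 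Being also closed, such a sublevel set is compact, so the continuous $g$ attains its minimum over $Z$ at a point $\vecpi_{\vecx}$. (In the application $C$ is the unit cube and $Z$ is already compact, so existence is immediate.)

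The crux is the first-order optimality condition, which I would record as the variational inequality
\[
\big(\nabla\Phi(\vecpi_{\vecx})-\nabla\Phi(\vecx)\big)\cdot(\vecy-\vecpi_{\vecx})\ \ge\ 0\qquad\text{for all }\vecy\in Z .
\]
To derive it, fix $\vecy\in Z$ and put $\vecv=\vecy-\vecpi_{\vecx}$; by convexity the segment $\vecpi_{\vecx}+t\vecv$ lies in $Z\subseteq C$ for $t\in[0,1]$, so $h(t):=g(\vecpi_{\vecx}+t\vecv)$ is convex on $[0,1]$ and minimized at $t=0$. Hence its right derivative satisfies $h'(0^{+})\ge0$, and it only remains to identify $h'(0^{+})$ with $\big(\nabla\Phi(\vecpi_{\vecx})-\nabla\Phi(\vecx)\big)\cdot\vecv$.

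I expect this identification to be the main obstacle, because $\vecpi_{\vecx}$ may lie on the boundary of $C$ --- indeed the entire segment toward $\vecy$ may run along $\partial C$ --- where $\Phi$ need not be differentiable. This is precisely where the hypothesis that $\nabla\Phi$ extends to a continuous function on all of $C$ is essential. I would establish $\lim_{t\to0^{+}}t^{-1}\big(\Phi(\vecpi_{\vecx}+t\vecv)-\Phi(\vecpi_{\vecx})\big)=\nabla\Phi(\vecpi_{\vecx})\cdot\vecv$ by pushing the segment into the interior: replacing $\vecy$ by $(1-\varepsilon)\vecy+\varepsilon{\mathbold c}_0$ for a fixed ${\mathbold c}_0\in\mathrm{int}\,C$ sends the open segment into $\mathrm{int}\,C$, where the fundamental theorem of calculus and continuity of $\nabla\Phi$ deliver the derivative, and one then lets $\varepsilon\to0$. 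With this in hand $h'(0^{+})=\big(\nabla\Phi(\vecpi_{\vecx})-\nabla\Phi(\vecx)\big)\cdot\vecv\ge0$, which is the desired inequality.

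Finally I would deduce (\ref{pythagoras}) from the three-point identity
\[
d_{\Phi}(\vecy,\vecpi_{\vecx})=d_{\Phi}(\vecy,\vecx)-d_{\Phi}(\vecpi_{\vecx},\vecx)-\big(\nabla\Phi(\vecpi_{\vecx})-\nabla\Phi(\vecx)\big)\cdot(\vecy-\vecpi_{\vecx}),
\]
valid for arbitrary points of $C$ and checked by expanding each of the three Bregman divergences and cancelling the common terms. Since the variational inequality makes the last inner-product term nonnegative, dropping it gives $d_{\Phi}(\vecy,\vecpi_{\vecx})\le d_{\Phi}(\vecy,\vecx)-d_{\Phi}(\vecpi_{\vecx},\vecx)$ for every $\vecy\in Z$, which is exactly (\ref{pythagoras}).
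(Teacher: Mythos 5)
Your proposal is correct and follows essentially the same route as the paper's own proof: existence and uniqueness of the minimizer $\vecpi_{\vecx}$ of the strictly convex map $\vecy \mapsto d_{\Phi}(\vecy,\vecx)$ over $Z$, the first-order variational inequality $\left(\nabla\Phi(\vecpi_{\vecx})-\nabla\Phi(\vecx)\right)\cdot(\vecy-\vecpi_{\vecx}) \geq 0$ obtained by differentiating along the segment from $\vecpi_{\vecx}$ toward $\vecy$, and the three-point identity that converts this inequality into (\ref{pythagoras}). The only difference is one of detail, not of method: you explicitly justify two steps the paper leaves implicit, namely coercivity (boundedness of sublevel sets) for existence of the minimizer, and the identification of the one-sided directional derivative of $\Phi$ at boundary points of $C$ via approximation from the interior using the continuous extension of $\nabla\Phi$.
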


Its worth observing that Proposition \ref{robthm2}
also holds if $\vecx \in Z$, in which case $\vecpi_{\vecx} = \vecx$ and (\ref{pythagoras}) is trivially satisfied.

\section{Proof of Theorem \ref{thm1}}

The main idea of the proof is more apparent when $s$ is bounded. So we consider this
case on its own before allowing $s$ to reach $+\infty$.

\noindent{\bf Bounded Case.}

Suppose $s$ is bounded.
In this case, the derivative of the corresponding $\varphi$ from Eq.~(\ref{reps}) in Proposition~\ref{robthm1}
is continuous and bounded all the way up to the boundary of $[0,1]$.

Let $\vecf \in [0,1]^n$ be a forecast and, for $\omega \in \Omega$,  let $\vecv_\omega\in V$ be the vector with components $C_{E_i}(\omega)$.
Let $\Phi(\vecx) = \sum_{i=1}^n \varphi(x_i)$.
Then
\begin{eqnarray}\nonumber
\PE(\omega,\vecf) & = & \sum_{i = 1}^ns(\CF{E_i}(\omega),f_i)
\quad\text{[Definition \ref{def2}]}\\ \nonumber
               & = & \sum_{i = 1}^n -\varphi(f_i) -
               \varphi'(f_i)(\CF{E_i}(\omega) - f_i)
               \quad\text{[Proposition \ref{robthm1}]}\\ \nonumber
               & = & d_{\Phi}(\vecv_\omega,\vecf) - \sum_{i = 1}^n \varphi(\CF{E_i}(\omega)) \quad\text{[Definition~\ref{bregdef}]}\\
               & = & d_{\Phi}(\vecv_\omega,\vecf) + \sum_{i=1}^n
               s(C_{E_i}(\omega),C_{E_i}(\omega)) \quad\text{[Equation
                 \ref{danAdd1}]}. \label{repr}
\end{eqnarray}

Now assume that $\vecf$ is incoherent which, by Proposition~\ref{dlem1},
means that $\vecf\not\in \CONV(V)$. According to
Eq.~(\ref{pythagoras}) of Proposition~\ref{robthm2}, there exists a
$\vecg\in \CONV(V)$, namely the projection of $\vecf$ onto $\CONV(V)$,
such that $d_{\Phi}(\vecy, \vecg)\leq d_{\Phi}(\vecy, \vecf) -
d_{\Phi}(\vecg,\vecf)$ for all $\vecy \in \CONV(V)$ and hence, in
particular, for $\vecy \in V$.  
Since $d_{\Phi}(\vecg,\vecf)>0$ this proves part $(b)$ of
Theorem~\ref{thm1}.

To prove part $(a)$ first note that weak dominance of $\vecf$ by
$\vecg$ means that $d_{\Phi}(\vecv_\omega,\vecg) \leq
d_{\Phi}(\vecv_\omega,\vecf)$ for all $\vecv_\omega\in V$, by Eq.
(\ref{repr}). In this case, $d_{\Phi}(\vecy,\vecg) \leq
d_{\Phi}(\vecy,\vecf)$ for all $\vecy \in \CONV(V)$, since
$d_{\Phi}(\vecy,\vecg) - d_{\Phi}(\vecy,\vecf)$ depends linearly on
$\vecy$. If $\vecf$ is coherent, $\vecf\in \CONV(V)$ by
Proposition~\ref{dlem1}, and hence $d_{\Phi}(\vecf,\vecg) \leq
d_{\Phi}(\vecf,\vecf)=0$. This implies that $\vecg=\vecf$.

\bigskip
\noindent{\bf Unbounded Case.}

Next, consider the case when $s$ is unbounded. In this case, the
derivative of the corresponding $\varphi$ from Proposition~\ref{robthm1}
diverges either at $0$ or $1$, or at both values, and hence we can not
directly apply Proposition~\ref{robthm2}. Eq.~(\ref{repr}) is still valid,
with both sides of the equation possibly being $+\infty$. However, if
$\vecf$ lies either in the interior of $[0,1]^n$, or on a point on the
boundary where the derivative of $\Phi(\vecx)=\sum_i \varphi(x_i)$ does
not diverge, an examination of the proof of Proposition~\ref{robthm2}
shows that the result still applies, as we show now.

If $\nabla\Phi(\vecf)$ is finite, the minimum of $\Phi(\vecy) -
\nabla\Phi(\vecf)\cdot \vecy$ over $\vecy \in \CONV(V)$ is uniquely
attained at some $\vecg \in \CONV(V)$. Moreover, $\nabla\Phi(\vecg)$
is necessarily finite. Repeating the argument in the proof of
Proposition~\ref{robthm2} shows that $d_\Phi(\vecy,\vecg) \leq
d_\Phi(\vecy,\vecf)- d_\Phi(\vecg,\vecf)$ for any $\vecy \in
\CONV(V)$, which is the desired inequality needed in the proof of
Theorem~\ref{thm1}$(b)$.
We are thus left with the case in which $\vecf$ lies on an $(n-1)$
dimensional face of $[0,1]^n$ where the normal derivative diverges.
Consider first the case $n=1$. Then either $V=\{0,1\}$, in which case
$\vecf$ is coherent, or $V=\{0\}$ or $\{1\}$, in which case it is
clear that the unique coherent vector $\vecg\in V$ strongly dominates
$\vecf$.

We now proceed by induction on the dimension $n$ of the forecast $\vecf$.
In the $(n-1)$ dimensional
hypercube, either $\vecf$ lies inside or on a point of the boundary where
the normal derivative of $\Phi$ is finite, in which case we have just
argued that there exists a $\tilde \vecg$ that is coherent and satisfies
$\PE(\omega,\tilde \vecg) < \PE(\omega,\vecf)$ for all $\omega$ such that
$\vecv_\omega$ lies in the $(n-1)$ dimensional face. In the other case,
the induction hypothesis implies that we can find such a $\tilde \vecg$.
Note that for all the other $\omega$, $\PE(\omega,\tilde \vecg)=\PE(\omega,\vecf)=\infty$.  Now simply pick an $0<\epsilon<1$
and choose $\vecg_{\epsilon}= (1-\epsilon)\tilde \vecg + \epsilon l^{-1}
\sum_{i=1}^l \vecv_i$, where the $\vecv_i$ denote all the $l$ elements of $V$
outside the $(n-1)$-dimensional hypercube.  Then
$\PE(\omega,\vecg_\epsilon)<\infty$ for all $\omega$ and also, using
Lemma~\ref{tl}, $\lim_{\epsilon\to 0} \PE(\omega,\vecg_\epsilon) =
\PE(\omega,\tilde \vecg)$.  Hence we can choose $\epsilon$ small enough to
conclude that $\PE(\omega,\vecg_\epsilon)<\PE(\omega,\vecf)$ for all
$\omega\in \Omega$. This finishes the proof of part $(b)$ in the
general case of unbounded $s$.

To prove part $(a)$ in the general case, we note that if $\vecf = \sum_i
a_i \vecv_i$ for $\vecv_i\in V$ and $a_i>0$, then necessarily
$d_\Phi(\vecv_i,\vecf)<\infty$. That is, any coherent $\vecf$ is a convex
combination of $\vecv_i \in V$ such that $d_\Phi(\vecv_i,\vecf)<\infty$. This
follows from the fact that a component of $\vecf$ can be $0$ only if this
component is $0$ for all the $\vecv_i$'s. The same is true for the value
$1$. But the $d_\Phi(\vecv,\vecf)$ can be infinite only if some component of
$\vecf$ is $0$ and the corresponding one for $\vecv$ is $1$, or vice versa.

Since $d_\Phi(\vecv_i,\vecf)<\infty$ for the $\vecv_i$ in question,
also $d_\Phi(\vecv_i,\vecg)<\infty$ by Eq.~(\ref{repr}) and the
assumption that $\vecf$ is weakly dominated by $\vecg$. Moreover,
$d_\Phi(\vecv_i,\vecg)-d_\Phi(\vecv_i,\vecf)\leq 0$. But $\sum_{i} a_i
( d_\Phi(\vecv_i,\vecg)-d_\Phi(\vecv_i,\vecf) ) =
d_\Phi(\vecf,\vecg)\geq 0$, hence $\vecf=\vecg$.  \hfill\qed

\section{Proofs of Propositions~\ref{dlem1}--\ref{robthm2}}

\begin{proof}[Proof of Proposition \ref{dlem1}]
Recall that $n$ is the dimension of $\BE$, and that $k$ is the number of
elements in $V$.
Let $X$ be the collection of all nonempty sets of form $\bigcap_{i=1}^n
E^*_i$, where
$E^*_i$ is either $E_i$ or its complement.
($X$ corresponds to the minimal non-empty
regions appearing in the Venn diagram of \BE.)
It is easy to see that:
\begin{enumerate}\setlength{\parskip}{.0cm}
\item\label{dan1} $X$ partitions $\Omega$.
\end{enumerate}
It is also clear that there is a one-to-one correspondence
between $X$ and $V$ with the property that $e\in X$ is mapped to
$\vecv\in V$ such that for all $i\le n$, $e\subseteq E_i$ iff $v_i =
1$. (Here, $v_i$ denotes the $i$th component of $\vecv$.) Thus, there
are $k$ elements in $X$. We enumerate them as $e_1, \cdots, e_k$, and
the corresponding $\vecv$ by $\vecv(e_j)$.
Plainly,
for all $i\le n$, $E_i$ is the disjoint union of
$\{e_j\MID j \le k\ \wedge\ v(e_j)_i = 1\}$, and hence:
\begin{enumerate}\setlength{\parskip}{.0cm}\setcounter{enumi}{1}
\item\label{dan5}
For any measure \PROB\ , $\PROB(E_i) = \sum_{j=1}^k \PROB(e_j)v(e_j)_i$ for all $1\leq i\le n$.
\end{enumerate}
\noindent
For the left-to-right direction of the proposition,
suppose that forecast $f$ is coherent via probability measure $\PROB$. Then
$f_i = \PROB(E_i)$ for all $i\le n$ and hence by (\ref{dan5}),
$f_i = \sum_{j=1}^k \PROB(e_j)v(e_j)_i$.
But the $\PROB(e_j)$ are non-negative
and sum to one by (\ref{dan1}), which shows that $\vecf\in\CONV(V)$.

For the converse, suppose that $\vecf\in \CONV(V)$, which means that there are non-negative $a_j$'s, with $\sum_j a_j =1$, such that $\vecf = \sum_{j=1}^k a_j\vecv(e_j)$. Let $\PROB$
be some  probability measure such that $\PROB(e_j) = a_j$  for all $j\le k$.
By (\ref{dan1}) and the assumption about the $a_i$, it is clear that such
a measure \PROB\ exists. For all $i\le n$, $f_i = \sum_{j=1}^k a_jv(e_j)_i =
\sum_{j=1}^k \PROB(e_j)v(e_j)_i = \PROB(E_i)$ by (\ref{dan5}), thereby exhibiting
$\vecf$ as coherent.
\end{proof}

Before giving the proof of Proposition~\ref{robthm1}, we state and prove the following technical Lemma.

\begin{lem}\label{tl}
Let $\varphi:[0,1] \to \Re$ be bounded, convex and differentiable on $(0,1)$. Then the limits $\lim_{p\to 0,1} \varphi(p)$ and $\lim_{p\to 0,1} \varphi'(p)$ exist, the latter possibly being equal to $-\infty$ at $x=0$ or $+\infty$ at $x=1$. Moreover,
\begin{equation}\label{tle}
\lim_{p\to 0}p \varphi'(p) = \lim_{p\to 1} \varphi'(p)(1-p) =0 \,.
\end{equation}
\end{lem}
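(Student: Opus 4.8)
The plan is to extract everything from one structural fact: since $\varphi$ is convex and differentiable on $(0,1)$, its derivative $\varphi'$ is non-decreasing there. Existence of the two derivative limits is then immediate, since a non-decreasing function has one-sided limits in the extended reals: $\lim_{p\to0^+}\varphi'(p)=\inf_{(0,1)}\varphi'\in[-\infty,+\infty)$ and $\lim_{p\to1^-}\varphi'(p)=\sup_{(0,1)}\varphi'\in(-\infty,+\infty]$, which is exactly the asserted sidedness of the possible infinities.

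For the limits of $\varphi$ itself I would argue that $\varphi$ is monotone near each endpoint. Writing $L_0=\lim_{p\to0^+}\varphi'(p)$: if $L_0\ge0$ then $\varphi'\ge0$ on all of $(0,1)$ and $\varphi$ is non-decreasing, while if $L_0<0$ (possibly $-\infty$) there is $a$ with $\varphi'(a)<0$, so $\varphi'<0$ on $(0,a)$ and $\varphi$ decreases there. Either way $\varphi$ is bounded and monotone on a one-sided neighbourhood of $0$, so $\lim_{p\to0}\varphi(p)$ exists and is finite; the argument at $1$ is symmetric.

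The real content is \eqref{tle}, and here I would use a sandwich built from the mean value of $\varphi'$ over short intervals abutting the endpoint. On any compact subinterval $\varphi$ is Lipschitz, hence absolutely continuous, so $\varphi(b)-\varphi(a)=\int_a^b\varphi'(t)\,dt$. For small $p$ (with $2p<1$), monotonicity of $\varphi'$ gives $\varphi'(t)\le\varphi'(p)$ on $[p/2,p]$ and $\varphi'(t)\ge\varphi'(p)$ on $[p,2p]$, whence
\[
2\bigl(\varphi(p)-\varphi(p/2)\bigr)\ \le\ p\,\varphi'(p)\ \le\ \varphi(2p)-\varphi(p).
\]
As $p\to0^+$ both outer terms tend to $0$, because $\varphi$ has a common finite limit at $0$, so $\lim_{p\to0}p\,\varphi'(p)=0$. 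The crucial point here is that this bracketing holds and forces the product to vanish \emph{even when} $\varphi'(p)\to-\infty$; that is the only delicate case, and the sandwich disposes of it without having to estimate the rate of blow-up directly.

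The limit at $1$ follows the same template. For $p$ near $1$ (with $2p-1>0$), using the intervals $[2p-1,p]$ and $[p,(1+p)/2]$ together with monotonicity of $\varphi'$ yields
\[
\varphi(p)-\varphi(2p-1)\ \le\ (1-p)\,\varphi'(p)\ \le\ 2\Bigl(\varphi\bigl(\tfrac{1+p}{2}\bigr)-\varphi(p)\Bigr),
\]
and both outer terms tend to $0$ as $p\to1^-$ since $\varphi$ has a finite limit at $1$. Hence $\lim_{p\to1}\varphi'(p)(1-p)=0$, completing the proof.
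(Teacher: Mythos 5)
Your proof is correct and follows essentially the same route as the paper: convexity gives monotonicity of $\varphi'$ (hence existence of all four limits, with the stated sidedness of the infinities), and \eqref{tle} is obtained by comparing $p\,\varphi'(p)$ against increments of $\varphi$ via monotonicity of $\varphi'$ under the integral. Your two-sided dyadic sandwich (using $[p/2,p]$ and $[p,2p]$) is just a slightly more explicit version of the paper's one-sided inequality $0=\lim_{p\to 0}\int_0^p\varphi'(q)\,dq\leq\lim_{p\to 0}p\,\varphi'(p)$ combined with the sign of $\varphi'$ near the endpoint, and it has the minor advantage of avoiding the improper integral.
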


\begin{proof}[Proof of Lemma~\ref{tl}]
Since $\varphi$ is convex, the limits $\lim_{p\to 0,1} \varphi(p)$ exist, and they are finite since $\varphi$ is bounded. Moreover, $\varphi'$ is a monotone increasing function,
and hence also $\lim_{p\to 0,1} \varphi'(p)$ exists (but possibly equals $-\infty$ at $x=0$ or $+\infty$ at $x=1$).
Finally, Eq. (\ref{tle}) follows again from monotonicity
of $\varphi'$ and boundedness of $\varphi$, using that $0 = \lim_{p\to 0} \int_{0}^p \varphi'(q) dq \leq \lim_{p\to0} p \varphi'(p)$,
and likewise at $p=1$.
\end{proof}

\begin{proof}[Proof of Proposition \ref{robthm1}]
Let $s$ be a proper scoring rule. For $0<p<1$, let
\begin{equation}\label{px}
\varphi(p) = - \min_{x} \left\{ p s(1,x) +(1-p) s(0,x)\right\} \,.
\end{equation}
By Definition~\ref{def1p}(\ref{deflpb}), the minimum in (\ref{px}) is achieved at $x=p$,
hence $\varphi(p) = - p s(1,p) - (1-p) s(0,p)$.

As a minimum over linear functions, $-\varphi$ is concave; hence $\varphi$ is convex.  Clearly,
$\varphi $ is bounded (because $s \geq 0$ implies, from (\ref{px}),  that $\varphi \leq 0$, but a
convex function can become unbounded only by going to $+\infty$).

The fact that the minimum is achieved uniquely (Def. \ref{def1p})
implies that $\varphi$ is  strictly convex for the following reason.
We take $x,\, y \in (0,1) $ and $0<a<1$ and set $z= ax +(1-a)y$. Then
$\varphi(y)  = -y \, s (1, y) - (1-y) \,  s (0, y) >  -y  \, s (1, x) -
(1-y) \,  s(0,z)$
by uniqueness of the minimizer at $y \neq z$.  Similarly, $\varphi(x) =
-x \,  s(1,x) - (1-x) \,  s(0,x) >  -x  \, s(1,z) -(1-x)  \, s(0,z)$. By
adding $a$ times the first inequality to $1-a$ times the second we obtain
$a \varphi (y) + (1-a) \varphi (x) > -z  \, s(1,z)  - (1-z)  \, s(0,z) = \varphi
(z)$, which is precisely the statement of strict convexity.

Let $\psi(p) = s(0,p) - s(1,p)$. If $\varphi$ is differentiable and
$\varphi'(p) = \psi(p)$ for all $0<p<1$, then (\ref{reps}) is
satisfied, as simple algebra shows.

We shall now show that $\varphi$ is, in fact, differentiable and $\varphi'= \psi$.
For any $p\in (0,1)$ and small enough $\epsilon$, we have
\begin{multline*}
\frac 1{\epsilon}\left(\varphi(p+\epsilon)-\varphi(p)\right) = \psi(p) \\ -
\frac1\epsilon\left[ (p+\epsilon)\left(s(1,p+\epsilon) - s(1,p)\right)
    + (1-p-\epsilon)\left(s(0,p+\epsilon)-s(0,p)\right)\right]\,.
\end{multline*}
Since $(p+\epsilon)s(1,x)+(1-p-\epsilon)s(0,x)$ is minimized at
$x=p+\epsilon$ by Definition~\ref{def1p}(\ref{deflpb}), the last term in square brackets is negative. Hence
$$
\lim_{\epsilon\to 0} \frac 1{\epsilon}\left(\varphi(p+\epsilon)-\varphi(p)\right) \geq \psi(p) \,,
$$
and similarly one shows
$$
\lim_{\epsilon\to 0} \frac 1{\epsilon}\left(\varphi(p)-\varphi(p-\epsilon)\right) \leq \psi(p) \,.
$$
Since $\psi$ is continuous by Definition~\ref{def1p}(\ref{deflpa}),
this shows that $\varphi$ is differentiable, and hence $\psi =
\varphi'$. This proves Eq.~(\ref{reps}). Continuity of $\varphi$ up to
the boundary of $[0,1]$ follows from continuity of $s$ and Lemma~\ref{tl}.

To prove the converse, first note that if $\varphi$ is bounded and convex on $(0,1)$, it can be extended to a continuous function on $[0,1]$, as shown in  Lemma~\ref{tl}. Because of strict convexity of $\varphi$ we have, for $p\in[0,1]$ and $0<x<1$,
\begin{equation}\label{sc}
p s(1,x) + (1-p) s(0,x) = -\varphi(x) - \varphi'(x) (p-x) \geq -\varphi(p) \,,
\end{equation}
with equality if and only if $x=p$.

It remains to show that the same is true for $x\in \{0,1\}$. Consider first the case $x=0$. We have to show that $p s(1,0) + (1-p) s(0,0)> - \varphi(p)$  for $p>0$. By continuity of $s$, Eq. (\ref{reps}) and Lemma~\ref{tl}, we have $s(1,0) = -\varphi(0) - \lim_{p\to 0} \varphi'(p)$, while $s(0,0) = -\varphi(0)$.
If $\lim_{p\to 0} \varphi'(p) = -\infty$, the result is immediate. If $\varphi'(0):=\lim_{p\to 1} \varphi'(p)$ is finite, we have $-\varphi(0) - p \varphi'(0) >  -\varphi(p)$ again by strict convexity of $\varphi$.

Likewise, one shows that  $p s(1,1) + (1-p) s(0,1) > -\varphi(p)$ for $p<1$. This finishes the proof that $s$ is a proper scoring rule.
\end{proof}

\begin{proof}[Proof of Proposition~\ref{robthm2}]
    For fixed $\vecx \in C$, the function $\vecy \mapsto
    d_{\Phi}(\vecy,\vecx)$ is strictly convex, and hence achieves
    a unique minimum at a point $\vecpi_{\vecx}$ in the convex, closed set $Z$.

Let $\vecy\in Z$. For $0\leq \epsilon\leq 1$,
$(1-\epsilon)\vecpi_\vecx + \epsilon \vecy \in Z$, and hence
$d_\Phi( (1-\epsilon)\vecpi_\vecx + \epsilon \vecy,\vecx) -
d_\Phi(\vecpi_\vecx,\vecx)\geq 0$ by the definition of $\vecpi_\vecx$. Since
$d_\Phi$ is differentiable in the first argument, we can divide by
$\epsilon$ and let $\epsilon\to 0$ to obtain
$$
0\leq \lim_{\epsilon\to 0} \frac 1\epsilon \left( d_\Phi( (1-\epsilon)\vecpi_\vecx + \epsilon \vecy,\vecx) -
d_\Phi(\vecpi_\vecx,\vecx)\right)=
\left(\nabla\Phi(\vecpi_{\vecx})- \nabla\Phi(\vecx)\right)\cdot (\vecy -\vecpi_{\vecx}) \,.
$$
The fact that
$$
d_{\Phi}(\vecy,\vecx) -
d_{\Phi}(\vecpi_{\vecx},\vecx) - d_{\Phi}(\vecy,\vecpi_{\vecx}) =
\left(\nabla \Phi(\vecpi_{\vecx})- \nabla \Phi(\vecx)\right)\cdot (\vecy -\vecpi_{\vecx})
$$
proves the claim.
\end{proof}

\section{Generalizations}\label{discussion}

\subsection{Penalty functions}

Theorem~\ref{thm1} holds for a larger class of penalty functions.
In fact, one can use different proper scoring rules for every event,
and replace (\ref{penalty}) by
\begin{equation} \nonumber
\PE(\omega,\vecf) = \sum_{i \le n}s_i(\CF{E_i}(\omega),f_i)\,,
\end{equation}
where the $s_i$ are possibly distinct proper scoring rules. In this way,
forecasts for some events can be penalized differently than others.
The relevant Bregman divergence in this case is given by $\Phi(\vecx)=\sum_i \varphi_i(x_i)$,
where $\varphi_i$ is determined by $s_i$ via (\ref{reps}).
Proof of this generalization closely follows the argument given above, so it is omitted.
Additionally, by considering more general convex functions $\Phi$ our argument generalizes
to certain non-additive penalties.

\subsection{Generalized scoring rules}

\subsubsection{Non-uniqueness}
If one relaxes the condition of {\it unique} minimization in Definition~\ref{def1p}(\ref{deflpb}),
a weaker form of Theorem~\ref{thm1} still holds. Namely, for any incoherent forecast $\vecf$ there
exists a coherent forecast $\vecg$ that weakly dominates $\vecf$. Strong dominance will not hold in
general, as the example of $s(i,x)\equiv 0$ shows.

Proposition~\ref{robthm1} also holds in this generalized case, but the function $\varphi$ need not
be {\it strictly} convex. Likewise, Proposition~\ref{robthm2} can be generalized to merely convex
(not necessarily strictly convex) $\Phi$ but in this case the projection $\vecpi_\vecx$ need not be unique.
Eq.~(\ref{pythagoras}) remains valid.

\subsubsection{Discontinuity}
A generalization that is more interesting mathematically is to discontinuous scoring rules.
Proposition~\ref{robthm1} can be generalized to scoring rules that satisfy neither the continuity condition in
Definition~\ref{def1p} nor unique minimization.
(This is also shown in \citealp{Gneiting07}).

\begin{prop}\label{Robthm1}
Let $s:\{0,1\}\times [0,1]\to [0,\infty]$ satisfy
\begin{equation}\label{newc}
p s(1,x) + (1-p) s(0,x) \geq p s(1,p) + (1-p) s(0,p) \quad \forall x, p\in[0,1]\,.
\end{equation}
Then the function $\varphi:[0,1] \mapsto \Re$ defined by $\varphi(x) = - x s(1,x) - (1-x) s(0,x)$ is bounded and convex. Moreover, there exists a
 monotone non-decreasing function
$\psi:[0,1]\mapsto\Re\cup\{\pm\infty\}$, with the property that
\begin{align}\label{proppsi}
\psi(x) &\geq \lim_{\epsilon \to 0} \frac 1\epsilon \left(
   \varphi(x)-\varphi(x-\epsilon)\right) \quad \forall x\in(0,1]\,, \\
\psi(x) &\leq \lim_{\epsilon \to
   0} \frac 1\epsilon \left( \varphi(x+\epsilon)-\varphi(x)\right) \quad
\forall x\in [0,1)\,, \label{proppsi2}
\end{align}
such that
\begin{equation}\label{rreps}
s(i,x) = -\varphi(x) -\psi(x)(i-x)\quad \forall x\in(0,1)\,.
\end{equation}
Function $\varphi$ is strictly convex if and only if the inequality (\ref{newc}) is strict for $x\neq p$.

Conversely, if $s$ is of the form (\ref{rreps}), with $\varphi$ bounded
and convex and $\psi$ satisfying
(\ref{proppsi})--(\ref{proppsi2}), then $s$ satisfies (\ref{newc}).
\end{prop}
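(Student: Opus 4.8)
The plan is to follow the proof of Proposition~\ref{robthm1} essentially verbatim, replacing the derivative $\varphi'$ by a one-sided subgradient selection and deleting every appeal to continuity of $s$, which was the only place the continuity hypothesis entered. Write $D^+\varphi$ and $D^-\varphi$ for the right and left derivatives of the convex function $\varphi$. First I would establish that $\varphi$ is bounded and convex exactly as before: condition (\ref{newc}) says the affine-in-$p$ function $x\mapsto p\,s(1,x)+(1-p)s(0,x)$ is minimized at $x=p$, so $\varphi(p)=-\min_x\{p\,s(1,x)+(1-p)s(0,x)\}$; as a minimum over linear functions, $-\varphi$ is concave and hence $\varphi$ is convex, and $s\ge 0$ forces $\varphi\le 0$, so the convex $\varphi$ cannot escape to $+\infty$ and is therefore bounded. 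Being a bounded convex function on $[0,1]$, $\varphi$ has one-sided derivatives everywhere, they are monotone non-decreasing, and $D^-\varphi(x)\le D^+\varphi(x)$ at each point while $D^+\varphi(x)\le D^-\varphi(y)$ whenever $x<y$.

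The crucial step is producing $\psi$. I would set $\psi(x)=s(0,x)-s(1,x)$; with this choice (\ref{rreps}) becomes an algebraic identity, since $-\varphi(x)-\psi(x)(i-x)$ reduces to $s(i,x)$ at $i=0,1$ directly from the definition of $\varphi$. What remains is to show that this $\psi$ is sandwiched between the one-sided derivatives, i.e.\ (\ref{proppsi})--(\ref{proppsi2}). For this I reuse the difference-quotient identity from Proposition~\ref{robthm1},
\begin{equation}\nonumber
\tfrac1\epsilon\bigl(\varphi(p+\epsilon)-\varphi(p)\bigr)=\psi(p)-\tfrac1\epsilon\bigl[(p+\epsilon)\bigl(s(1,p+\epsilon)-s(1,p)\bigr)+(1-p-\epsilon)\bigl(s(0,p+\epsilon)-s(0,p)\bigr)\bigr],
\end{equation}
and observe that the bracketed term is $g(p+\epsilon)-g(p)$ for $g(x)=(p+\epsilon)s(1,x)+(1-p-\epsilon)s(0,x)$, which by (\ref{newc}) at probability $p+\epsilon$ is minimized at $x=p+\epsilon$ and so is $\le 0$. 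Letting $\epsilon\downarrow 0$ gives $\psi(p)\le D^+\varphi(p)$, and the analogous identity with a backward step yields $\psi(p)\ge D^-\varphi(p)$; the same computation at $p=0$ and $p=1$ supplies the one-sided bounds there. Monotonicity of $\psi$ is then immediate from $\psi(x)\le D^+\varphi(x)\le D^-\varphi(y)\le\psi(y)$ for $x<y$. No continuity of $s$ is invoked, and the values $\pm\infty$ for $\psi$ can occur only at the corners, where $s(1,0)$ or $s(0,1)$ may be $+\infty$, which the statement permits.

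For the strict-convexity equivalence, the direction ``strict (\ref{newc}) $\Rightarrow$ strict convexity'' is the argument already written out in Proposition~\ref{robthm1} (add $a$ times the strict minimization inequality at $y$ to $1-a$ times the one at $x$, both evaluated at the announcement $z=ax+(1-a)y$). Conversely, if $\varphi$ is strictly convex then for $x\in(0,1)$ the representation gives $p\,s(1,x)+(1-p)s(0,x)=-\varphi(x)-\psi(x)(p-x)$, and the strict subgradient inequality $\varphi(p)>\varphi(x)+\psi(x)(p-x)$ for $p\ne x$ is precisely strict (\ref{newc}); the endpoint announcements $x\in\{0,1\}$ are handled by the boundary estimate used in the converse part of Proposition~\ref{robthm1}, using $\psi(0)\le D^+\varphi(0)$ (with the inequality trivial when $\psi(0)=-\infty$) and symmetrically at $1$.

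Finally, the converse assertion of the proposition is a direct reading of the subgradient inequality: if $s$ has the form (\ref{rreps}) with $\psi$ obeying (\ref{proppsi})--(\ref{proppsi2}), then for $x\in(0,1)$ one computes $p\,s(1,x)+(1-p)s(0,x)+\varphi(p)=\varphi(p)-\varphi(x)-\psi(x)(p-x)\ge 0$, because $\psi(x)$ lies between the one-sided derivatives of the convex $\varphi$ and is therefore a genuine subgradient; the endpoint cases again follow as in Proposition~\ref{robthm1}. I expect the only real friction to be the bookkeeping at the boundary of $[0,1]$ and the $\pm\infty$ values of $\psi$ and $s$; everywhere else the proof is a transcription of Proposition~\ref{robthm1} with ``derivative'' weakened to ``subgradient'' and ``continuity'' deleted.
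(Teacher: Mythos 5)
Your proposal is correct and takes essentially the same approach as the paper: the paper in fact omits the proof of Proposition~\ref{Robthm1}, remarking only that it is ``virtually the same as the proof of Proposition~\ref{robthm1}'', and your write-up is precisely that adaptation --- defining $\psi(x)=s(0,x)-s(1,x)$, reusing the difference-quotient argument to sandwich $\psi$ between the one-sided derivatives of the convex function $\varphi$, and deleting every appeal to continuity of $s$. Your explicit treatment of the endpoint announcements and of the $\pm\infty$ values of $\psi$ is more careful than anything the paper records, so nothing in the proposal conflicts with the paper's (omitted) argument.
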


It is a fact \citep{Polya} that every convex function $\varphi$ on $[0,1]$
is continuous on $(0,1)$ and has a right and left derivative, $\psi_R$ and $\psi_L$ (defined
by the right sides of (\ref{proppsi2}) and (\ref{proppsi}), respectively) at every point (except
the endpoints, where it has only a right or left derivative, respectively). Both $\psi_R$ and $\psi_L$
are non-decreasing functions, and $\psi_L(x) \leq \psi_R(x)$ for all $x\in (0,1)$. Except
for countably many points, $\psi_L(x) = \psi_R(x)$, i.e., $\varphi$ is differentiable.
Eqs.~(\ref{proppsi})--(\ref{proppsi2}) say that $\psi_L(x)\leq \psi(x)\leq \psi_R(x)$.

Note that although $s(0,x)$ and $s(1,x)$ may be discontinuous, the combination
$\varphi(x)= -x s(1,x)-(1-x) s(0,x)$ is continuous. Hence, if $s(0,x)$ jumps
up at a point $x$, $s(1,x)$ has to jump down by an amount proportional to $(1-x)/x$.

The proof of Proposition~\ref{Robthm1} is virtually the same as
the proof of Proposition~\ref{robthm1}, so we omit it.

\subsection{Open question}
Whether Theorem~\ref{thm1} holds for this generalized notion of a discontinuous scoring rule
remains open. The proof of Theorem~\ref{thm1} given here
does not extend to the discontinuous case, since for inequality
(\ref{pythagoras}) to hold, differentiability of $\Phi$ is necessary, in general.

\bibliographystyle{plainnat}
\bibliography{EllRobDan}

\vspace*{.5in}

\begin{center}
\begin{tabular}{lll}
\begin{tabular}{l}
Joel Predd\\Rand Corporation\\
4570 Fifth Avenue, Suite 600\\
Pittsburgh, PA 15213\\
jpredd@rand.org\\\end{tabular}&
\begin{tabular}{l}
Robert Seiringer\\Dept.\ of Physics\\
Princeton University\\Princeton NJ 08540\\
rseiring@princeton.edu\\\end{tabular}\\[2cm]
\begin{tabular}{l}
Elliott Lieb\\Depts.\ of Mathematics and Physics\\
Princeton University\\Princeton NJ 08540\\
lieb@princeton.edu\\\end{tabular}&
\begin{tabular}{l}
Daniel Osherson\\Dept.\ of Psychology\\
Princeton University\\Princeton NJ 08540\\
osherson@princeton.edu\\\end{tabular}\\[2cm]
\begin{tabular}{l}
Vincent Poor\\Dept.\ of Electrical Engineering\\
Princeton University\\Princeton NJ 08540\\
poor@princeton.edu\\\end{tabular}&
\begin{tabular}{l}
Sanjeev Kulkarni\\Dept.\ of Electrical Engineering\\
Princeton University\\Princeton NJ 08540\\
kulkarni@princeton.edu\\\end{tabular}\\
\end{tabular}
\end{center}

\end{document}